\useunder{\uline}{\ul}{}
\icmltitlerunning{Human Interaction and Interpretability Paper}
\begin{document}

\twocolumn[
\icmltitle{Regularizing Black-box Models for Improved Interpretability}

\icmlsetsymbol{equal}{*}

\begin{icmlauthorlist}
\icmlauthor{Gregory Plumb}{cmu}
\icmlauthor{Maruan Al-Shedivat}{cmu}
\icmlauthor{Eric Xing}{cmu}
\icmlauthor{Ameet Talwalkar}{cmu}
\end{icmlauthorlist}

\icmlaffiliation{cmu}{Machine Learning Department, Carnegie Mellon University, Pittsburgh, USA}

\icmlcorrespondingauthor{Gregory Plumb}{gdplumb@andrew.cmu.edu}

\icmlkeywords{interpretability, transparency}

\vskip 0.3in
]

\printAffiliationsAndNotice{}

\begin{abstract}
Most of the work on interpretable machine learning has focused on designing either inherently interpretable models, which typically trade-off accuracy for interpretability, or post-hoc explanation systems, which lack guarantees about their explanation quality.
We propose an alternative to these approaches by directly regularizing a black-box model for interpretability at training time.
Our approach explicitly connects three key aspects of interpretable machine learning:
(i) the model's innate explainability,
(ii) the explanation system used at test time, and
(iii) the metrics that measure explanation quality.
Our regularization results in substantial improvement in terms of the explanation fidelity and stability metrics across a range of datasets and black-box explanation systems while slightly improving accuracy.
Further, if the resulting model is still not sufficiently interpretable, the weight of the regularization term can be adjusted to achieve the desired trade-off between accuracy and interpretability.  
Finally, we justify theoretically that the benefits of explanation-based regularization generalize to unseen points.
\end{abstract}

\section{Introduction}
\label{sec:introduction}

Complex learning-based systems are increasingly shaping our daily lives, and, in order to monitor and understand these systems, we require clear explanations of model behavior.
While model interpretability has many definitions and is often largely application specific \citep{lipton2016mythos}, local explanations are a popular and powerful tool~\citep{ribeiro2016should}.
Recent work on local interpretability in machine learning ranges from proposals of new models that are interpretable \emph{by-design} \citep[\eg,][]{wang2015falling, caruana2015intelligible} to model-agnostic, \emph{post-hoc} algorithms for interpreting complex, black-box predictors such as ensembles and deep neural networks \citep[\eg,][]{ribeiro2016should,lei2016rationalizing, lundberg2017unified, selvaraju2017grad, kim2018interpretability}.
Despite the variety of technical approaches, the underlying goal of all of these works is to develop an interpretable predictive system that produces two outputs: a prediction and its underlying explanation.

Both interpretability by-design and post-hoc explanation strategies have limitations.
On the one hand, the by-design approaches are restricted to working with model families that provide inherent explainability, potentially at the cost of accuracy.
On the other hand, by performing two disjointed steps, there is no guarantee that post-hoc explainers applied to an arbitrary model will produce explanations of suitable quality.
Moreover, recent approaches that claim to overcome this apparent trade-off between prediction accuracy and explanation quality are in fact by-design proposals that impose certain constraints on the underlying model families they consider~\citep[\eg,][]{alshedivat2017cen, plumb2018model, melis2018towards}. 
In this work, we propose a novel alternative strategy called \emph{Explanation-based Optimization} (\name) that aims to address both of these shortcomings by adding an \emph{interpretability regularizer} to the loss function of an arbitrary predictive model.
A small demo of how our regularizer can influence the explainability and accuracy of a model is in Figure~\ref{fig:demo}.  

\textbf{Illustration.}
To motivate \name, consider a situation where Bob's loan application is denied by a machine learning system (see Figure~\ref{fig:toy} for a toy illustration).
In this setting, a good local explanation can help Bob understand how to improve his application in order to get the loan.
Unfortunately, as we see from Figure~\ref{fig:toy}, a standard model---a multi-layer perceptron trained with SGD---is difficult to explain well because it has many kinks and abrupt changes.
Indeed, we can quantitatively measure the quality of local explanations using the standard \emph{fidelity} \citep{ribeiro2016should, plumb2018model} and \emph{stability} \citep{melis2018towards} explanation metrics. 
To make the learned model more amenable to local explanation, \name augments the objective function with fidelity- or stability-based regularizers, effectively controlling the degree of local explainability.

\begin{figure}[t]
    \centering
    \begin{subfigure}[b]{0.45\textwidth}
    \includegraphics[width=\textwidth]{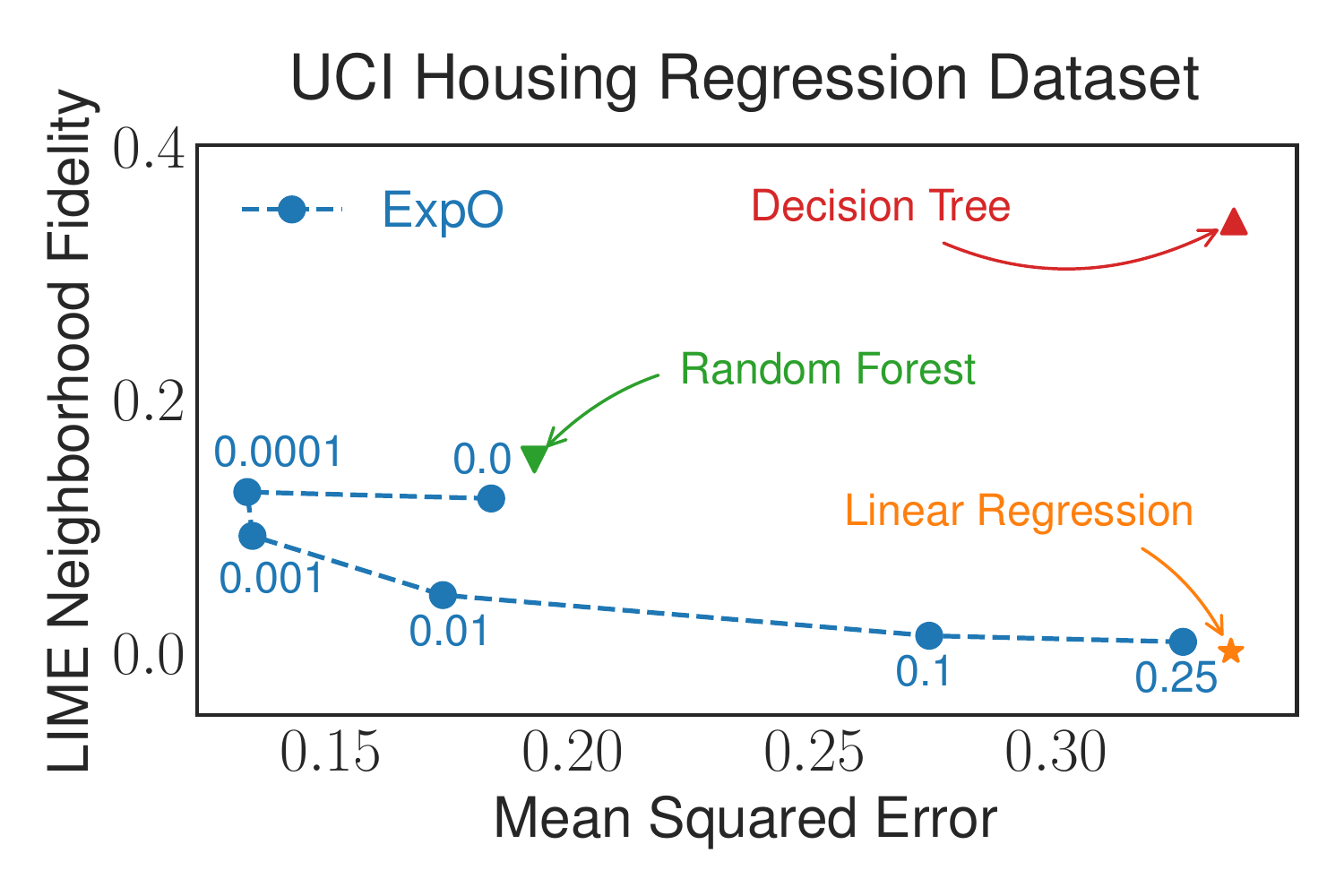}\vspace{-1.5ex}
    \caption{}\label{fig:demo}
    \end{subfigure}
    \begin{subfigure}[b]{0.4\textwidth}
    \includegraphics[width=\textwidth]{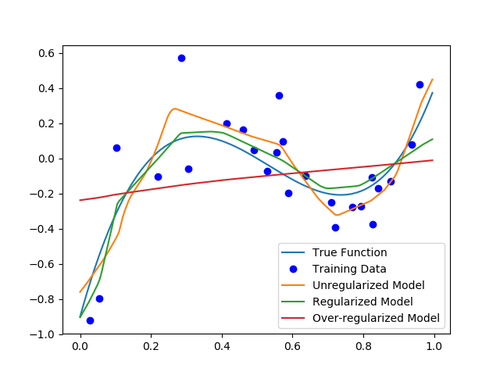}\vspace{-1.5ex}
    \caption{}\label{fig:toy}
    \end{subfigure}
    \caption{\tiny%
    \textbf{(a)} Neighborhood Fidelity of LIME-generated explanations (lower is better) vs. predictive error of several models trained on the UCI Housing regression dataset.
    The values in blue denote regularization weight.  
    \textbf{(b)} The effects of \name on a model predicting hypothetical credit rating.
    The abrupt kinks in the unregularized model make local linear approximations both less faithful to the model and less stable to small perturbations.
    The regularized model is much smoother and therefore easier to explain.}
    \vspace{-1.5ex}
\end{figure}

\vspace{30pt}
The specific contributions of our work are as follows:
\begin{enumerate}[itemsep=1pt,topsep=0pt,leftmargin=14pt]
    \item \textbf{Interpretability Regularizers.}
    We introduce two  explanation regularizers associated with the fidelity and stability explanation metrics.  
    The first, \regf, is designed for semantic features and explainers that directly make predictions, such as \citep{ribeiro2016should, lundberg2017unified,plumb2018model}. 
    The second, \regs, is tailored for non-semantic features (\eg, pixels) and explainers such as saliency maps~\citep{simonyan2013deep}, which identify features that are influential on a prediction.
    Both regularizers are differentiable and can be used to augment the objective function of an arbitrary model.
    In Section~\ref{sec:intuition}, we discuss how they differ from the classical approaches for local approximation and function smoothing.
    \item \textbf{Generalizable Explanation Quality.}
    We analyze the properties of the explanation quality metrics and show that the benefits of our regularization generalize to unseen points.
    Specifically, we derive a bound on the gap between the fidelity of explanations on training and held out points and connect it with the local variance of the learned model.
\end{enumerate}
\textbf{Empirical Results.}
We evaluate models trained with and without the proposed regularizers on a variety of regression and classification tasks with semantic and image features.\footnote{The code for our regularizers and all experiments is at:https://github.com/GDPlumb/ExpO}
We show experimentally that our regularizers slightly improve predictive performance across the nine datasets we consider (seven UCI regression tasks, a medical classification task, and MNIST).
Moreover, from an interpretability perspective, our results demonstrate significant improvement in terms of explanation quality as measured by the fidelity and stability metrics.
In particular, our regularization technique improved explanation fidelity by at least 25\% on the UCI datasets and on the medical classification task; stability on MNIST was improved by orders of magnitude.
\section{Background and Related Work}
\label{sec:background}

In this section, we introduce our notation, provide the necessary background on local explanations, and review the previous work that is most closely related to \name.

Consider a supervised learning problem, where our goal is to estimate a model, $f: \Xc \mapsto \Yc$, $f \in \Fc$, that maps input feature vectors, $x \in \Xc$, to targets, $y \in \Yc$, and is trained using data, $\{x_i, y_i\}_{i=1}^N$.
If the class of functions used for modeling the data is complex, we can understand the behavior of $f$ in some neighborhood, $N_x \in \Pc[\Xc]$ (where $\Pc[\Xc]$ is the space of probability distributions over $\Xc$), by generating a local \emph{explanation}.
We denote algorithms that produce explanations (\ie, \emph{explainers}) as $e: \Xc \times \Fc \mapsto \Ec$, where $\Ec$ is the set of possible explanations.
The choice of $\Ec$ generally depends on whether or not $\Xc$ consists of \emph{semantic features}, and  will be defined more precisely next.

\subsection{Semantic Features}

We call features \emph{semantic} if people can reason about them and understand what it means when their values change (\eg, a person's income, the concentration of a chemical, etc.).
Consequently, local explanations try to predict how the model's output would change if the input was perturbed \cite{ribeiro2016should, lundberg2017unified, plumb2018model}.
Thus, we can define the output space of the explainer as $\Ec_{s} := \{g \in \Gc \mid g: \Xc \mapsto \Yc\}$, where $\Gc$ is a class of interpretable (typically linear) functions.

\paragraph{Fidelity-Metric.}
When the explainer's output space is $\Ec_{s}$, the explanation is defined as a function $g: \Xc \mapsto \Yc$, and it is natural to evaluate how accurately $g$ models $f$ in a neighborhood $N_x$~\citep{ribeiro2016should, plumb2018model}:
\begin{equation}
    \label{eq:fidelity}
    F(f, g, N_x) := \mathbb{E}_{x' \sim N_x}[\left(g(x') - f(x')\right)^2],
\end{equation}
which we refer to as the \emph{neighborhood-fidelity} (NF) metric.
This metric is sometimes evaluated with $N_x$ as a point mass on $x$ and we call this version the \emph{point-fidelity} (PF) metric.
While \citet{plumb2018model} argued that point-fidelity can be misleading because it does not measure generalization of $e(x,f)$ across $N_x$, it has been used for evaluation in the prior work~\citep{ribeiro2016should,lundberg2017unified,ribeiro2018anchors} and we report it in our experiments along with the neighborhood-fidelity for completeness.

\paragraph{Black-box Explanation Systems.} 
Various explainers have been proposed to generate local explanations of the form $g: \Xc \mapsto \Yc$, typically assuming that $g$ is linear.
In particular, LIME~\citep{ribeiro2016should}, one of the most popular black-box explanation systems\footnote{SHAP \citep{lundberg2017unified} is a popular variation of LIME that proposes a theoretically-motivated neighborhood sampling function, but requires explanations to be linear models that act on binary features. 
This requirement is too limiting in our case, hence SHAP is not used in our study.}, solves the following optimization problem:
\begin{equation}
    \label{eq:lime}
    e(x, f) := \argmin_{g \in \Ec_s} F(f, g, N_x) + \Omega(g),
\end{equation}
where $\Omega(e)$ stands for an additive regularizer that encourages certain desirable properties of the explanations (\eg, sparsity). 
LIME's objective function is closely related to the fidelity metric and subsequently to our proposed \regf regularizer.
Consequently, we expect our regularizer to improve the quality of LIME-generated explanations and our experimental results in Section~\ref{sec:exp-fidelity} corroborate this hypothesis.

Along with LIME, we consider another black-box explanation tool, called MAPLE~\citep{plumb2018model}.
It differs substantially from LIME in that its \emph{neighborhood function is learned from the data} rather than specified as a parameter. 
In our experiments, we evaluate the quality of MAPLE-generated local explanations for models regularized via \regf, but do not use MAPLE's learned neighborhood function to define \regf. 
We view this as a good test case to see how optimizing the fidelity metric for one neighborhood generalizes to another one (see Section \ref{sec:methods} for a more detailed discussion of this point).
In Section~\ref{sec:exp-fidelity}, we see that regularizing for LIME neighborhoods improves MAPLE's explanation quality.

\subsection{Non-Semantic Features}
Non-semantic features lack an inherent interpretation, with images being a canonical example.\footnote{%
In general, it is not clear how to interpret perturbations on the pixel level or whether such perturbations result in `real' images.
However, in certain cases such as scientific imaging \cite{de2016jet}, each pixel value may have a precise meaning because of the way the images are processed.}
When $\Xc$ consists of non-semantic inputs, we cannot assign meaning to the difference between $x$ and $x'$, hence it does not make sense to explain the difference between the predictions $f(x)$ and $f(x')$.
As a result, fidelity is not an appropriate explanation metric.
Instead, in this context, local explanations try to identify which parts of the input are particularly influential on a prediction \cite{sundararajan2017axiomatic}.
Consequently, we consider explanations of the form  $\Ec_{ns} := \mathbb{R}^{d}$, where $d$ is the number of features in $\Xc$.

\paragraph{Stability Metric and Saliency Maps.}
When the explainer's output space is $\Ec_{ns}$, the explanation is a vector in $\mathbb{R}^{d}$, and cannot be directly compared to the underlying model itself, as in the case of the fidelity metric. 
Instead, the focus in this setting is on the degree to which the explanation changes between points in a local neighborhood, which we measure using the \emph{stability metric}\cite{melis2018towards}:
\begin{equation} 
    \label{eq:stability}
    \Sc(f, e, N_x) := \mathbb{E}_{x' \sim N_x}[||e(x, f) - e(x', f)||_2^2]
\end{equation}
Various explainers~\citep{sundararajan2017axiomatic, zeiler2014visualizing, shrikumar2016not, smilkov2017smoothgrad, adebayo2018sanity} have been proposed to generate local explanations in $\Ec_{ns}$, with \emph{saliency maps} \citep{simonyan2013deep} being a popular approach that we consider in this work.
Saliency maps assign importance weights to image pixels based on the magnitude of the gradient of the predicted class with respect to the corresponding pixels.

Recent work on model interpretability emphasizes that more stable explanations tend to be more trustworthy~\citep{melis2018towards, ghorbani2017interpretation, alvarez2018robustness}.
Note that the stability metric can also be considered in the context of semantic features in addition to the fidelity metric, and we consider both in our experiments.

\subsection{Related Methods}
A few recently proposed approaches to model interpretability are closely related to our work.
First, self-explaining neural networks (SENN)~\citep{melis2018towards} (a variation of contextual explanation networks~(CEN)~\citep{alshedivat2017cen}) is an interpretable by-design approach that additionally (indirectly) optimizes their models to produce stable explanations.
Second, ``Right For The Right Reasons'' (RTFR)~\citep{ross2017right} selectively penalizes gradients of the output with respect to certain input features at some points to discourage their use by the model.
Finally, a work concurrent with ours~\citep{lee2019functional} proposed to regularize models of structured data to encourage explainability in a way that is similar to \name. 

From a technical standpoint, SENN and RTFR both assume that the local explanation is close to the first order Taylor approximation of the model at that point.
In Section~\ref{sec:intuition}, we demonstrate how Taylor approximations are often quite different from and more difficult to use than the neighborhood-based local explanations that we use in \name.
Further, SENN's regularizer requires the neural network to have a very particular structure and, therefore, unlike \name, cannot by applied to an arbitrary model.
While RTFR's regularization can be used with arbitrary models, it is not directly related to a measure of explanation quality; on the other hand, \name aims to directly improve quality of explanations with respect to a specific metric.

\section{Explanation Optimization}
\label{sec:methods}

\begin{figure}[t]
\begin{minipage}[t]{0.56\textwidth}
\begin{algorithm}[H]
    \caption{\regf Regularizer}
    \label{alg:expo-fidelity-reg}
    \begin{algorithmic}[1]
    \small\vspace{2pt}
    \INPUT $f_\theta$, $x$, $N_x^\mathrm{reg}$, $m$
    \STATE Sample points: \\$x'_1, \dots, x'_m \sim N_x^\mathrm{reg}$
    \STATE Compute predictions: \\
    \begin{equation*}
        \hat y_j(\theta) = f_\theta(x'_j) %
    \end{equation*}\vspace{-2ex}
    \STATE Produce a local linear explanation:
    \begin{equation*}
        {\scriptscriptstyle \beta_x(\theta) = \argmin_\beta \sum\nolimits_{j=1}^m (\hat y_j(\theta) - \beta^\top x'_j)^2}
    \end{equation*}\vspace{-1ex}
    \OUTPUT ${\scriptscriptstyle \frac{1}{m}\sum_{j=1}^m (\hat y_j(\theta) - \beta_x(\theta)^\top x'_j)^2}$
    \end{algorithmic}
\end{algorithm}
\end{minipage}\hspace{2pt}%
\begin{minipage}[t]{0.43\textwidth}
\begin{algorithm}[H]
    \caption{\regs Regularizer}
    \label{alg:expo-stability-reg}
    \begin{algorithmic}[1]
    \small\vspace{2pt}
    \INPUT $f_\theta$, $x$, $N_x^\mathrm{reg}$, $m$
    \STATE Sample points: \\$x'_1, \dots, x'_m \sim N_x^\mathrm{reg}$
    \STATE Compute predictions:
    \begin{equation*}
        \hat y_j(\theta) = f_\theta(x'_j) %
    \end{equation*}
    \OUTPUT ${\scriptstyle \frac{1}{m}\sum_{j=1}^m (\hat y_j(\theta) - f_\theta(x))^2}$
    \end{algorithmic}
\end{algorithm}
\end{minipage}
\end{figure}

Running black-box explainers on arbitrary models does not guarantee the quality of the produced explanations.
To address this, we define a regularizer that can be added to the loss function and used to train an arbitrary model $f$.
Specifically, we want to solve the following optimization problem:
\begin{equation}
    \small
    \label{eq:expo-learning-objective}
    \hat f := \argmin_{f \in \Fc} \frac{1}{N} \sum_{i=1}^N (\Lc(f, x_i, y_i) + \gamma \Rc(f, N_{x_i}^\mathrm{reg}))
\end{equation}
where $\Lc(f, x_i, y_i)$ is a standard predictive loss (\eg, squared error for regression or cross-entropy for classification), $\Rc(f, N_{x_i}^\mathrm{reg})$ is a regularizer that encourages explainability of $f$ in the neighborhood of $x_i$, and $\gamma > 0$ controls the regularization strength.

We define $\Rc(f, N_{x}^\mathrm{reg})$ based on either the neighborhood-fidelity, Eq.~\eqref{eq:fidelity}, or the neighborhood-stability, Eq.~\eqref{eq:stability}.
In order to compute these metrics exactly, we would need to run an explainer algorithm, $e$;  this may be non-differentiable or too computationally expensive to use as a regularizer.  
Thus, for \regf, we approximate $e$ using a local linear model fit on points sampled from $N_x^{reg}$ (Algorithm~\ref{alg:expo-fidelity-reg}).  
For \regs, we simply require that the model's output not change too much across $N_x^{reg}$ (Algorithm~\ref{alg:expo-stability-reg}).\footnote{We note that a similar procedure was explored previously in \citep{zheng2016improving} for adversarial robustness.}

To define a good regularization neighborhood, $N_x^\mathrm{reg}$, requires taking the following into consideration.
On the one hand, we would like $N_x^\mathrm{reg}$ to be similar to $N_x$, as used in Eq.~\ref{eq:fidelity} or Eq.~\ref{eq:stability}, so that the neighborhoods used for regularization and for evaluation match.
On the other hand, we also would like $N_x^\mathrm{reg}$ to be consistent with the `local neighborhood' defined by $e$ internally, which may differ from $N_x$.
For LIME, this is not a problem since the internal definition of the `local neighborhood' is a hyperparameter that we can set.
However for MAPLE, the `local neighborhood' is learned from the data, and hence the regularization and explanation neighborhoods may differ.
Ultimately, we left resolving this tension to future work.  

\textbf{Computational Cost.}
Algorithm \ref{alg:expo-fidelity-reg} could be prohibitively expensive since the number of samples, $m$, from $N_x^{reg}$, has to be proportional to the dimension of $x$, resulting in $O(d^3)$ operations to compute the regularizer for a given point.
In addition to running experiments with \regf, we run experiments with a randomized version of the Algorithm~\ref{alg:expo-fidelity-reg} that randomly selects one dimension of $x$ to perturb according to $N_x^{reg}$ and penalizes the error of a local linear model along that dimension.
This breaks the dependence of the computational cost of the objective function on the dimension of $x$ (bringing it back to $O(1)$) and allows us to compute each gradient step with some constant increase in the number of function evaluations.
We call this variation \regfr.

\subsection{Understanding the Properties of ExpO}
\label{sec:intuition}

\begin{figure}[t]
    \centering
    \includegraphics[width=0.4\textwidth]{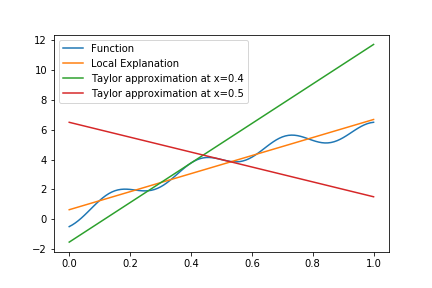}
    \includegraphics[width=0.37\textwidth]{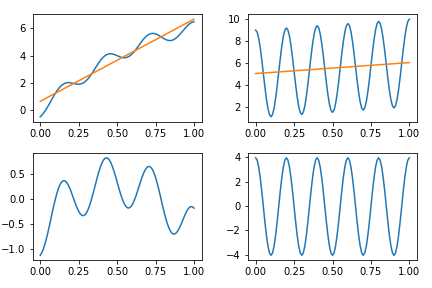}
    \vspace{-1ex}
    \caption{\tiny
    \textbf{Left:} A function (blue), its first order Taylor approximations at $x = 0.4$ (green) and $x = 0.5$ (red), and a local explanation of the function (orange) computed with $x = 0.5$ and $N_x = [0,1]$.
    \textbf{Right (top row):}  Two functions (blue) and their local linear explanations (orange).
    The local explanations were computed with $x = 0.5$ and $N_x = [0,1]$.
    \textbf{Right (bottom row):} The unexplained portion of the function (residuals).
    }
    \label{fig:theory-taylor}
    \label{fig:theory-variance}
\end{figure}

The goal of this section is to compare the behavior of local linear explanations and our regularizer to some existing theoretical function approximations and measures of variance to help develop an intuitive understanding of \name.  
First, we compare neighborhood-based local linear explanations to first order Taylor approximations to show that they can have fundamentally very different behaviors.  
Second, we compare \regf to the Lipchitz Constant (LC) and Total Variation (TV) of the learned function.

\vspace{10pt}
\textbf{Local Explanation vs. Taylor Approximations.}
A natural question to ask is, \emph{Why should we sample from $N_x$ in order to locally approximate $f$ when there are easier and theoretically motivated approximations?}
One possible way to do this is via the Taylor approximation~\citep{melis2018towards}.
The downside of a Taylor approximation-based approach is that such approximation cannot readily be adjusted to different neighborhood scales and its fidelity and stability strictly depend on the learned function.
This can be seen in Figure~\ref{fig:theory-taylor} where the Taylor approximations at two nearby points are both radically different and not faithful to the model outside of an infinitesimal neighborhood.

\textbf{Fidelity-Regularization and the Model's LC or TV.}
From a theoretical perspective, our regularizer is similar to controlling the Lipschitz Constant or Total Variation of $f$ across $N_x$ after removing the part of $f$ explained by $e(x,f)$.  
From an interpretability perspective, there is nothing inherently wrong with having a large LC or TV, which is demonstrated in Figure~\ref{fig:theory-variance}.  
However, once we take into account what can be explained by $e(x,f)$, then upper bounding any one of \regf, the LC, or the TV will upper bound the remaining ones.

\subsection{Generalization of Local Linear Explanations}

To conclude our analysis, we study the quality of local linear explanations in terms of generalization.
Note that \name regularization encourages learning models that are explainable in the neighborhoods of each \emph{training point}.
However, how would this property generalize to unseen points?

We answer this question by providing a generalization bound in terms of neighborhood-fidelity metric.
First, we assume that local linear explanations, $\beta_x$, are obtained by solving the ordinary least squares regression problem (as given in Algorithm~\ref{alg:expo-fidelity-reg}).
The fidelity of the explanation in expectation over the neighborhood $N_x$ can be computed analytically:
\begin{equation}\tiny
    \label{eq:expected-residual-ordinary-regression}
    \begin{split}
        \MoveEqLeft r(f, x) =  \ep[N_x]{f(x')^2} -  \ep[N_x]{f(x') x'}^\top \ep[N_x]{[x' x'^\top]}^{-1} \ep[N_x]{f(x') x'} %
    \end{split}
\end{equation}
where expectation $\ep[N_x]{\cdot}$ is taken with respect to $x'$ over the neighborhood $N_x$.
Note the equality in \eqref{eq:expected-residual-ordinary-regression} is the expected value of the squared residual between $f(x)$ and the optimal local linear explanation, which is upper-bounded by the variance of the model in the corresponding neighborhood.  

For the explanations to generalize, we would like to make sure that the gap between the average fidelity on the training set and the expected fidelity is small with high probability.
More formally, the following inequality should hold:
\begin{equation}
    \label{eq:fidelity-bound-in-probability-generic}
    \prob{\ep{r(f, x)} - \frac{1}{n} \sum_{i=1}^n r(f, x_i) > \varepsilon} < \delta_n(\varepsilon)
\end{equation}
Under certain mild assumptions on the local behavior of $f(x)$, the following proposition specifies a particular bound.
Further, we show empirically that the benefits of our novel regularizers on explanation quality provably generalize to unseen test points.

\begin{proposition}
    Let the neighborhood sampling function $N_x$ be characterized by some parameter $\sigma$ (\eg, the effective radius of a neighborhood) and the variance of the trained model $f(x)$ across all such neighborhoods be bounded by some constant $C(\sigma) > 0$.
    Then, the following bound holds with at least $1 - \delta$ probability:
    \begin{equation}
        \label{eq:fidelity-bound-in-probability-hoeffding}
        \ep{r(f, x)} \leq \frac{1}{n} \sum_{i=1}^n r(f, x_i) + \sqrt{\frac{C^2(\sigma)\log\frac{1}{\delta}}{2n}}
    \end{equation}
\end{proposition}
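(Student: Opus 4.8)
The plan is to read \eqref{eq:fidelity-bound-in-probability-hoeffding} as a one-sided concentration statement for the empirical mean of i.i.d.\ bounded quantities, and to obtain it from Hoeffding's inequality. Treating the learned model $f$ as fixed, I would define $Z_i := r(f, x_i)$. Since $r(f, \cdot)$ depends only on $f$ and the neighborhood sampler $N_{(\cdot)}$, and since the training points $x_1, \dots, x_n$ are drawn i.i.d.\ from the data distribution, the $Z_i$ are i.i.d.\ with common mean $\mathbb{E}[r(f, x)]$ and empirical average $\tfrac{1}{n}\sum_i Z_i$. The entire statement then reduces to controlling the deviation of this average below its mean.

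The key structural step, and the one I would carry out first, is to verify that each $Z_i$ lies in a bounded interval of width at most $C(\sigma)$. Nonnegativity is immediate: the right-hand side of \eqref{eq:expected-residual-ordinary-regression} is exactly the minimized least-squares objective $\min_\beta \mathbb{E}_{N_x}[(f(x') - \beta^\top x')^2] \geq 0$. For the upper bound I would exploit that the linear explanation class contains constant functions (the feature vector $x'$ carries an intercept term), so the optimal linear fit can only improve on the best constant fit; comparing the two gives $r(f, x) \leq \min_c \mathbb{E}_{N_x}[(f(x') - c)^2] = \operatorname{Var}_{N_x}[f(x')]$, i.e.\ the residual is dominated by the neighborhood variance of $f$. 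By the hypothesis that this variance is bounded by $C(\sigma)$ uniformly over neighborhoods, we obtain $0 \leq Z_i \leq C(\sigma)$.

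With the range $[0, C(\sigma)]$ in hand, I would apply the one-sided Hoeffding bound to obtain $\mathbb{P}\!\left(\mathbb{E}[r(f, x)] - \tfrac{1}{n}\sum_{i=1}^n r(f, x_i) > \varepsilon\right) \leq \exp\!\left(-2n\varepsilon^2 / C^2(\sigma)\right)$, which matches the form anticipated in \eqref{eq:fidelity-bound-in-probability-generic} with $\delta_n(\varepsilon) = \exp(-2n\varepsilon^2/C^2(\sigma))$. Setting the right-hand side equal to $\delta$ and solving for the threshold yields $\varepsilon = \sqrt{C^2(\sigma)\log(1/\delta)/(2n)}$, and rearranging the event gives exactly \eqref{eq:fidelity-bound-in-probability-hoeffding}. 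I expect the only genuine obstacle to be the boundedness argument: making precise that the OLS residual is controlled by the neighborhood variance---which requires the intercept to sit inside $x'$, or else an explicit centering step---and confirming that the uniform bound $C(\sigma)$ is the correct quantity to feed into the Hoeffding range term. Everything downstream of that is the routine invert-the-tail computation.
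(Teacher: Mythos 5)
Your proposal is correct and follows essentially the same route as the paper's own (sketch) proof: bound each residual by $0 \leq r(f, x_i) \leq C(\sigma)$ via the fact that the OLS residual is dominated by the neighborhood variance of $f$, then apply the one-sided Hoeffding inequality and invert the tail to get the stated $\sqrt{C^2(\sigma)\log(1/\delta)/(2n)}$ term. Your explicit caveat that the comparison to the best constant fit requires an intercept in the linear explanation class is a detail the paper elides but does not change the argument.
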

\begin{proof}
    (Sketch)
    By assumption, the variance of the model $f(x)$ is bounded in each local neighborhood specified by $N_x$.
    Then \eqref{eq:expected-residual-ordinary-regression} implies that each residual is bounded as $0 \leq r(f, x) \leq C(\sigma)$.
    The result then follows by applying Hoeffding's inequality and rearranging the terms.  
\end{proof}

\section{Experimental Results}
\label{sec:experiments}

\begin{wraptable}[10]{r}{0.25\textwidth}
\RawFloats
\centering
\caption{Statistics of the datasets.}
\label{tab:datasets}
\tiny
\resizebox{\textwidth}{!}{
\begin{tabular}{@{}lrr@{}}
\toprule
\textbf{Dataset}    & \textbf{\# samples}   & \textbf{\# dims}          \\ \midrule
autompgs            & 392                   & 8                         \\
communities         & 1993                  & 103                       \\
day                 & 731                   & 15                        \\
housing             & 506                   & 12                        \\
music               & 1059                  & 70                        \\
winequality-red     & 1599                  & 12                        \\
YearPredictionMSD   & 515345                & 90                        \\
SUPPORT2            & 9104                  & 51                        \\ 
MNIST               & 60000                 & 784                       \\ \bottomrule
\end{tabular}}
\end{wraptable}

In our first set of experiments, we demonstrate the effectiveness of \regf and \regfr on datasets with semantic features using several regression problems from the UCI collection \citep{dua2017uci} as well as an in-hospital mortality classification problem.\footnote{\url{http://biostat.mc.vanderbilt.edu/wiki/Main/SupportDesc}.}
Our second experiment demonstrates the effectiveness of \regs for creating saliency maps \citep{simonyan2013deep} on MNIST \citep{mnist}.
Dataset statistics are given in Table~\ref{tab:datasets}.

\subsection{Neighborhood-Fidelity Regularization}
\label{sec:exp-fidelity}

First, we compare models trained without our regularizers to models trained with them.
We report accuracy and three interpretability metrics: (1) Point-Fidelity (PF), (2) Neighborhood-Fidelity (NF), (3) Stability (S) for explanations generated by LIME and MAPLE.
For example, the ``MAPLE-PF'' label corresponds to the Point-Fidelity Metric for explanations produced by MAPLE.

\textbf{Experimental Setup.}
The network architectures and hyper-parameters were chosen by a simple grid search.
All inputs were standardized to have mean zero and variance one (including the response variable for regression problems).  
For the final set of experiments, we set $N_x$ to be $\mathcal{N}(x, \sigma)$ with $\sigma = 0.1$.
Analysis of the effects of different neighborhood sizes is given in Figure~\ref{fig:scale} and shows that the size is not critical (the value of LIME-NF increase only slightly with $\sigma$).
For the UCI regression datasets and the in-hostpital mortality classification task, we set $N_x^{reg}$ to be $\mathcal{N}(x, \sigma)$ with $\sigma = 0.5$ as we found this to produce slightly more accurate and more interpretable models (Figure~\ref{fig:scale}).

\begin{figure}
    \centering
    \begin{subfigure}[b]{0.73\textwidth}
    \includegraphics[width=\textwidth]{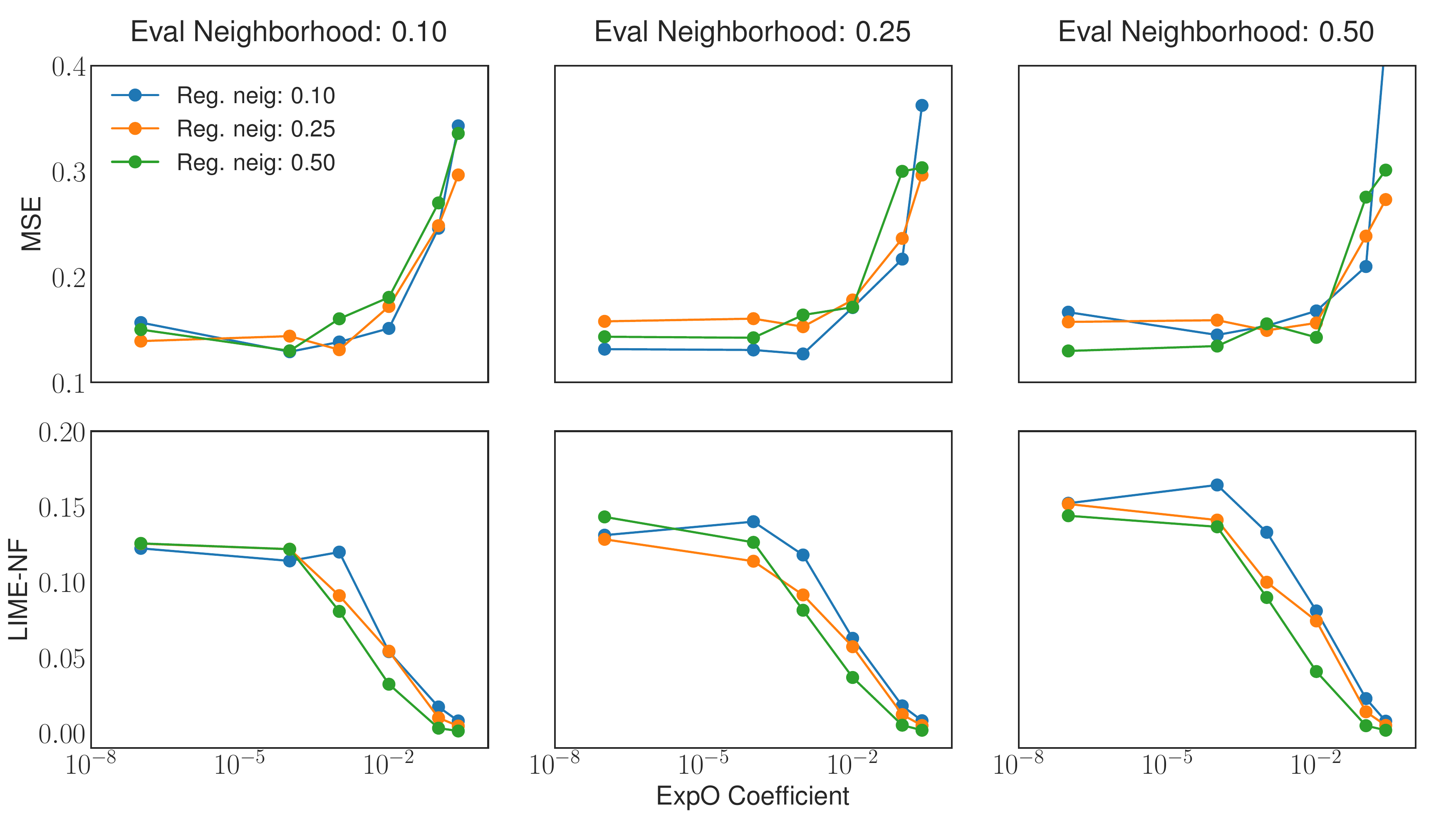}\vspace{-1.6ex}
    \caption{}\label{fig:scale}
    \end{subfigure}
    \quad
    \begin{subfigure}[b]{0.21\textwidth}
    \includegraphics[width=\textwidth]{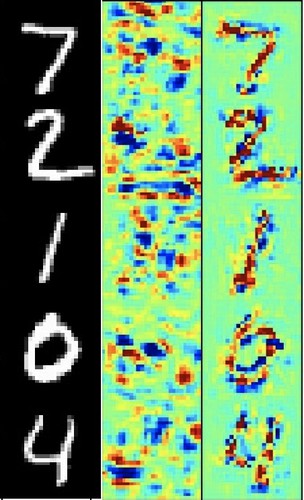}\vspace{2ex}
    \caption{}\label{fig:mnist}
    \end{subfigure}
    \caption{\tiny%
    \textbf{(a)} A comparison showing the effects of the $\sigma$ parameter of $N_x$ and $N_x^{reg}$ on the UCI Housing dataset.  
    The LIME-NF metric grows slowly with $\sigma$ for $N_x$ as expected.  
    Despite being very large, using $\sigma = 0.5$ for $N_x^{reg}$ is generally best for the LIME-NF metric and possibly for accuracy.
    \textbf{(b)} Original MNIST images (left) and saliency maps of an unregularized (middle) and regularized (right) models.}
    \vspace{-2ex}
\end{figure}

\textbf{UCI Regression Experiments.}
The effects of \regf and \regfr on model accuracy and interpretability metrics are in Table \ref{table:uci}.
\regf frequently improved the interpretability metrics by over 50\%, with the smallest improvements being around 25\%.
In fact, our regularization lowered the prediction error on the `communities', `day', and `YearPredictionMSD' datasets, which lets us conclude that it has a small positive effect on accuracy as well as a substantial benefit to the interpretability metrics.
\regfr, while generally having a similar effect, consistently improve interpretability of the models.

We also run experiments on the `YearPredictionMSD' dataset\footnote{The task is to predict release year of song from a set of acoustic features, treated as a regression problem as in ~\citet{bloniarz2016supervised}; the dataset is denoted MSD in Table~\ref{table:uci}.} to understand the scalability of \name to larger tasks.
However, MAPLE-based evaluation was fairly slow on this dataset, and hence we only evaluate the interpretability metrics with respect to LIME on the first 1000 testing points.
Both \regf and \regfr improved LIME's interpretability metrics at least 50\% and both improved the model accuracy.

\textbf{Medical Classification Experiments.}
The SUPPORT2 dataset is used for in-hospital mortality prediction.
The output layer of our models is the softmax over logits for two classes.
Consequently, we run each explanation system on each of the individual logits. 
Table~\ref{table:med} presents the results.
Again, we observe that \regf did not affect the accuracy but did improve the interpretability metrics by 50\% or more.
\regfr slightly decreased accuracy and did not improve the interpretability metrics by as much as \regf, but it did improve them by at least 25\%. 

\begin{table}[t]
\RawFloats
\caption{\tiny%
Uregularized model vs. the same model trained with \regf or \regfr on the UCI regression datasets.
Results are shown across 20 trials (with the standard error in parenthesis).
Statistically significant improvement ($p = 0.05$) due to \regft is denoted in bold and due to \regfrt is underlined.}
\label{table:uci}
\resizebox{\textwidth}{!}{
\begin{tabular}{ll|rrrrrrr}
\topline\headcol
{\bf Metric}    & {\bf Regularizer} & {\bf autompgs}        & {\bf communities}     & {\bf day$^\dagger$} $(10^{-3})$   & {\bf housing}         & {\bf music}           & {\bf winequality.red}     & {\bf MSD}                 \\ \midline
                & None              & 0.14 (0.03)           & {\ul 0.49 (0.05)}     & {\ul 1.000 (0.300)}               & 0.14 (0.05)           & 0.72 (0.09)           & 0.65 (0.06)               & 0.583 (0.018)             \\
{\bf MSE}       & \regft            & 0.13 (0.02)           & {\bf 0.46 (0.03)}     & {\bf 0.002 (0.002)}               & 0.15 (0.05)           & 0.67 (0.09)           & 0.64 (0.06)               & {\bf 0.557 (0.0162)}       \\
                & \regfrt           & 0.13 (0.02)           & 0.55 (0.04)           & 5.800 (8.800)                     & 0.15 (0.07)           & 0.74 (0.07)           & 0.66 (0.06)               & {\ul 0.548 (0.0154)}       \\ \rowmidlinewc\rowcol

                & None              & 0.040 (0.011)         & 0.100 (0.013)         & 1.200 (0.370)                     & 0.14 (0.036)          & 0.110 (0.037)         & 0.0330 (0.0130)           & 0.116 (0.0181)            \\ \rowcol
{\bf LIME-PF}   & \regft            & {\bf 0.011 (0.003)}   & {\bf 0.080 (0.007)}   & {\bf 0.041 (0.007)}               & {\bf 0.057 (0.017)}   & {\bf 0.066 (0.011)}   & {\bf 0.0025 (0.0006)}     & {\bf 0.0293 (0.00709)}     \\ \rowcol
                & \regfrt           & {\ul 0.029 (0.007)}   & {\ul 0.079 (0.026)}   & 0.980 (0.380)                     & {\ul 0.064 (0.017)}   & {\ul 0.080 (0.039)}   & {\ul 0.0029 (0.0011)}     & {\ul 0.057 (0.0079)}     \\ \rowmidlinecw

                & None              & 0.041 (0.012)         & 0.110 (0.012)         & 1.20 (0.36)                       & 0.140 (0.037)         & 0.112 (0.037)         & 0.0330 (0.0140)           & 0.117 (0.0178)            \\ 
{\bf LIME-NF}   & \regft            & {\bf 0.011 (0.003)}   & {\bf 0.079 (0.007)}   & {\bf 0.04 (0.07)}                 & {\bf 0.057 (0.018)}   & {\bf 0.066 (0.011)}   & {\bf 0.0025 (0.0006)}     & {\bf 0.029 (0.007)}     \\ 
                & \regfrt           & {\ul 0.029 (0.007)}   & {\ul 0.080 (0.027)}   & 1.00 (0.39)                       & {\ul 0.064 (0.017)}   & {\ul 0.080 (0.039)}   & {\ul 0.0029 (0.0011)}     & {\ul 0.0575 (0.0079)}     \\ \rowmidlinewc\rowcol

                & None              & 0.0011 (0.0006)       & 0.022 (0.003)         & 0.150 (0.021)                     & 0.0047 (0.0012)       & 0.0110 (0.0046)       & 0.00130 (0.00057)         & 0.0368 (0.00759)          \\ \rowcol
{\bf LIME-S}    & \regft            & {\bf 0.0001 (0.0003)} & {\bf 0.005 (0.001)}   & {\bf 0.004 (0.004)}               & {\bf 0.0012 (0.0002)} & {\bf 0.0023 (0.0004)} & {\bf 0.00007 (0.00002)}   & {\bf 0.00171 (0.00034)}    \\ \rowcol
                & \regfrt           & {\ul 0.0008 (0.0003)} & {\ul 0.018 (0.008)}   & {\ul 0.100 (0.047)}               & {\ul 0.0025 (0.0007)} & 0.0084 (0.0052)       & {\ul 0.00016 (0.00005)}   & {\ul 0.0125 (0.00291)}    \\ \rowmidlinecw

                & None              & 0.0160 (0.0088)       & 0.16 (0.02)           & 1.0000 (0.3000)                   & 0.057 (0.024)         & 0.17 (0.06)           & 0.0130 (0.0078)           & ---                       \\
{\bf MAPLE-PF}  & \regft            & {\bf 0.0014 (0.0006)} & {\bf 0.13 (0.01)}     & {\bf 0.0002 (0.0003)}             & {\bf 0.028 (0.013)}   & 0.14 (0.03)           & {\bf 0.0027 (0.0010)}     & ---                       \\
                & \regfrt           & {\ul 0.0076 (0.0038)} & {\ul 0.092 (0.03)}    & {\ul 0.7600 (0.3000)}             & {\ul 0.027 (0.012)}   & {\ul 0.13 (0.05)}     & {\ul 0.0016 (0.0007)}     & ---                       \\ \rowmidlinewc\rowcol

                & None              & 0.0180 (0.0097)       & 0.31 (0.04)           & 1.2000 (0.3200)                   & 0.066 (0.024)         & 0.18 (0.07)           & 0.0130 (0.0079)           & ---                       \\ \rowcol
{\bf MAPLE-NF}  & \regft            & {\bf 0.0015 (0.0006)} & {\bf 0.24 (0.05)}     & {\bf 0.0003 (0.0004)}             & {\bf 0.033 (0.014)}   & {\bf 0.14 (0.03)}     & {\bf 0.0028 (0.0010)}     & ---                       \\ \rowcol
                & \regfrt           & {\ul 0.0084 (0.0040)} & {\ul 0.16 (0.05)}     & {\ul 0.9400 (0.3600)}             & {\ul 0.032 (0.013)}   & {\ul 0.14 (0.06)}     & {\ul 0.0017 (0.0008)}     & ---                       \\ \rowmidlinecw

                & None              & 0.0150 (0.0099)       & 1.2 (0.2)             & {\ul 0.0003 (0.0008)}             & 0.18 (0.14)           & 0.08 (0.06)           & 0.0043 (0.0020)           & ---                       \\
{\bf MAPLE-S}   & \regft            & {\bf 0.0017 (0.0005)} & {\bf 0.8 (0.4)}       & 0.0004 (0.0004)                   & {\bf 0.10 (0.08)}     & {\bf 0.05 (0.02)}     & {\bf 0.0009 (0.0004)}     & ---                       \\
                & \regfrt           & {\ul 0.0077 (0.0051)} & {\ul 0.6 (0.2)}       & 1.2000 (0.6600)                   & {\ul 0.09 (0.06)}     & {\ul 0.04 (0.02)}     & {\ul 0.0004 (0.0002)}     & ---                       \\ \bottomline
\end{tabular}}\\[-1.5ex]
\flushleft{\scriptsize $^\dagger$The relationship between inputs and targets in UCI Day dataset is very close to linear and hence all errors are orders of magnitude smaller.}
\caption{\tiny%
Uregularized model vs. the same model trained with \regf or \regfr on the SUPPORT2 binary classification dataset.
Each explanation metric computed for both positive and negative class logits.
Results are shown across 10 trials (with the standard error in parenthesis).
Improvement due to \regft and \regfrt over unregularized model is statistically significant ($p = 0.05$) for each metric.}
\label{table:med}
\setlength\tabcolsep{12pt}
\resizebox{\linewidth}{!}{
\begin{tabular}{ll|rrr|rrr}
\topline\headcol
{\bf Output} &  {\bf Regularizer} & {\bf LIME-PF}       & {\bf LIME-NF}         & {\bf LIME-S}          & {\bf MAPLE-PF}        & {\bf MAPLE-NF}        & {\bf MAPLE-S}     \\ \midline
                &  None           & 0.177 (0.063)       & 0.182 (0.065)         & 0.0255 (0.0084)       & 0.024 (0.008)         & 0.035 (0.010)         & 0.34 (0.06)       \\
Positive        &  \regft         & {\bf 0.050 (0.008)} & {\bf 0.051 (0.008)}   & {\bf 0.0047 (0.0008)} & {\bf 0.013 (0.004)}   & {\bf 0.018 (0.005)}   & {\bf 0.13 (0.05)} \\
                &  \regfrt        & {\ul 0.082 (0.025)} & {\ul 0.085 (0.025)}   & {\ul 0.0076 (0.0022)} & {\ul 0.019 (0.005)}   & {\ul 0.025 (0.005)}   & {\ul 0.16 (0.03)} \\ \rowmidlinewc\rowcol

                &  None           & 0.198 (0.078)       & 0.205 (0.080)         & 0.0289 (0.0121)       & 0.028 (0.010)         & 0.040 (0.014)         & 0.37 (0.18)       \\ \rowcol
Negative        &  \regft         & {\bf 0.050 (0.008)} & {\bf 0.051 (0.008)}   & {\bf 0.0047 (0.0008)} & {\bf 0.013 (0.004)}   & {\bf 0.018 (0.005)}   & {\bf 0.13 (0.03)} \\ \rowcol
                &  \regfrt        & {\ul 0.081 (0.026)} & {\ul 0.082 (0.027)}   & {\ul 0.0073 (0.0021)} & {\ul 0.019 (0.006)}   & {\ul 0.024 (0.007)}   & {\ul 0.16 (0.06)} \\ \bottomlinec
\end{tabular}}\\[-2ex]
\flushleft{\,\scriptsize {\bf Accuracy (\%):} None: $83.0 \pm 0.3$, \regft: $83.4 \pm 0.4$, \regfrt:  $82.0 \pm 0.3$.}
\end{table}

\subsection{Stability Regularization}

In this final experiment, we fit a convolutional neural network to MNIST and then evaluate the stability of its saliency maps to perturbations, where $N_x \equiv N_x^{reg} := \mathrm{Unif}(x - 0.05, x + 0.05)$.  
Both an unregularized model and a model trained with \regs achieved the accuracy of 99\%.
This demonstrates one of the practical differences between SENN~\citep{melis2018towards} and our regularizers:
SENN places strict structural constraints on the network and subsequently lowers the testing accuracy to roughly 97\%; this is not the case for \name which can be applied to arbitrary networks.
\name regularization decreased the average $l_2$ distance between the explanation at $x$ and some $x' \sim N_x$ from 6.94 to 0.0008.
Finally, our regularization makes the resulting saliency maps look much better qualitatively by focusing on the presence or absence of certain pen strokes as seen in Figure~\ref{fig:mnist}.

\section{Conclusion}
\label{sec:conclusion}

In this work, we have introduced the novel idea of directly regularizing arbitrary models to be more interpretable. 
We contrasted our regularizers to classical approaches for function approximation and smoothing and provided a generalization bound for them.  
We demonstrated, across a variety of problem settings and explainers, that our regularizers slightly improve model accuracy and improve the interpretability metrics by somewhere from 25\% to orders of magnitude.

\bibliographystyle{unsrtnat}
\bibliography{bibliography}

\begin{thebibliography}{27}
\providecommand{\natexlab}[1]{#1}
\providecommand{\url}[1]{\texttt{#1}}
\expandafter\ifx\csname urlstyle\endcsname\relax
  \providecommand{\doi}[1]{doi: #1}\else
  \providecommand{\doi}{doi: \begingroup \urlstyle{rm}\Url}\fi

\bibitem[Lipton(2016)]{lipton2016mythos}
Zachary~C Lipton.
\newblock The mythos of model interpretability.
\newblock \emph{arXiv preprint arXiv:1606.03490}, 2016.

\bibitem[Ribeiro et~al.(2016)Ribeiro, Singh, and Guestrin]{ribeiro2016should}
Marco~Tulio Ribeiro, Sameer Singh, and Carlos Guestrin.
\newblock Why should i trust you?: Explaining the predictions of any
  classifier.
\newblock In \emph{Proceedings of the 22nd ACM SIGKDD international conference
  on knowledge discovery and data mining}, pages 1135--1144. ACM, 2016.

\bibitem[Wang and Rudin(2015)]{wang2015falling}
Fulton Wang and Cynthia Rudin.
\newblock Falling rule lists.
\newblock In \emph{Artificial Intelligence and Statistics}, pages 1013--1022,
  2015.

\bibitem[Caruana et~al.(2015)]{caruana2015intelligible}
Rich Caruana et~al.
\newblock Intelligible models for healthcare: Predicting pneumonia risk and
  hospital 30-day readmission.
\newblock In \emph{Proceedings of the 21th ACM SIGKDD International Conference
  on Knowledge Discovery and Data Mining}, pages 1721--1730. ACM, 2015.

\bibitem[Lei et~al.(2016)Lei, Barzilay, and Jaakkola]{lei2016rationalizing}
Tao Lei, Regina Barzilay, and Tommi Jaakkola.
\newblock Rationalizing neural predictions.
\newblock \emph{arXiv preprint arXiv:1606.04155}, 2016.

\bibitem[Lundberg and Lee(2017)]{lundberg2017unified}
Scott~M Lundberg and Su-In Lee.
\newblock A unified approach to interpreting model predictions.
\newblock In \emph{Advances in Neural Information Processing Systems}, pages
  4765--4774, 2017.

\bibitem[Selvaraju et~al.(2017)Selvaraju, Cogswell, Das, Vedantam, Parikh, and
  Batra]{selvaraju2017grad}
Ramprasaath~R Selvaraju, Michael Cogswell, Abhishek Das, Ramakrishna Vedantam,
  Devi Parikh, and Dhruv Batra.
\newblock Grad-cam: Visual explanations from deep networks via gradient-based
  localization.
\newblock In \emph{Proceedings of the IEEE International Conference on Computer
  Vision}, pages 618--626, 2017.

\bibitem[Kim et~al.(2018)Kim, Wattenberg, Gilmer, Cai, Wexler, Viegas,
  et~al.]{kim2018interpretability}
Been Kim, Martin Wattenberg, Justin Gilmer, Carrie Cai, James Wexler, Fernanda
  Viegas, et~al.
\newblock Interpretability beyond feature attribution: Quantitative testing
  with concept activation vectors (tcav).
\newblock In \emph{International Conference on Machine Learning}, pages
  2673--2682, 2018.

\bibitem[Al-Shedivat et~al.(2017)Al-Shedivat, Dubey, and
  Xing]{alshedivat2017cen}
Maruan Al-Shedivat, Avinava Dubey, and Eric~P Xing.
\newblock Contextual explanation networks.
\newblock \emph{arXiv preprint arXiv:1705.10301}, 2017.

\bibitem[Plumb et~al.(2018)Plumb, Molitor, and Talwalkar]{plumb2018model}
Gregory Plumb, Denali Molitor, and Ameet~S Talwalkar.
\newblock Model agnostic supervised local explanations.
\newblock In \emph{Advances in Neural Information Processing Systems}, pages
  2516--2525, 2018.

\bibitem[Alvarez-Melis and Jaakkola(2018{\natexlab{a}})]{melis2018towards}
David Alvarez-Melis and Tommi Jaakkola.
\newblock Towards robust interpretability with self-explaining neural networks.
\newblock In \emph{Advances in Neural Information Processing Systems}, pages
  7785--7794, 2018{\natexlab{a}}.

\bibitem[Simonyan et~al.(2013)Simonyan, Vedaldi, and
  Zisserman]{simonyan2013deep}
Karen Simonyan, Andrea Vedaldi, and Andrew Zisserman.
\newblock Deep inside convolutional networks: Visualising image classification
  models and saliency maps.
\newblock \emph{arXiv preprint arXiv:1312.6034}, 2013.

\bibitem[Ribeiro et~al.(2018)Ribeiro, Singh, and Guestrin]{ribeiro2018anchors}
Marco~Tulio Ribeiro, Sameer Singh, and Carlos Guestrin.
\newblock Anchors: High-precision model-agnostic explanations.
\newblock AAAI, 2018.

\bibitem[de~Oliveira et~al.(2016)de~Oliveira, Kagan, Mackey, Nachman, and
  Schwartzman]{de2016jet}
Luke de~Oliveira, Michael Kagan, Lester Mackey, Benjamin Nachman, and Ariel
  Schwartzman.
\newblock Jet-images—deep learning edition.
\newblock \emph{Journal of High Energy Physics}, 2016\penalty0 (7):\penalty0
  69, 2016.

\bibitem[Sundararajan et~al.(2017)Sundararajan, Taly, and
  Yan]{sundararajan2017axiomatic}
Mukund Sundararajan, Ankur Taly, and Qiqi Yan.
\newblock Axiomatic attribution for deep networks.
\newblock In \emph{Proceedings of the 34th International Conference on Machine
  Learning-Volume 70}, pages 3319--3328. JMLR. org, 2017.

\bibitem[Zeiler and Fergus(2014)]{zeiler2014visualizing}
Matthew~D Zeiler and Rob Fergus.
\newblock Visualizing and understanding convolutional networks.
\newblock In \emph{European conference on computer vision}, pages 818--833.
  Springer, 2014.

\bibitem[Shrikumar et~al.(2016)Shrikumar, Greenside, Shcherbina, and
  Kundaje]{shrikumar2016not}
Avanti Shrikumar, Peyton Greenside, Anna Shcherbina, and Anshul Kundaje.
\newblock Not just a black box: Learning important features through propagating
  activation differences.
\newblock \emph{arXiv preprint arXiv:1605.01713}, 2016.

\bibitem[Smilkov et~al.(2017)Smilkov, Thorat, Kim, Vi{\'e}gas, and
  Wattenberg]{smilkov2017smoothgrad}
Daniel Smilkov, Nikhil Thorat, Been Kim, Fernanda Vi{\'e}gas, and Martin
  Wattenberg.
\newblock Smoothgrad: removing noise by adding noise.
\newblock \emph{arXiv preprint arXiv:1706.03825}, 2017.

\bibitem[Adebayo et~al.(2018)Adebayo, Gilmer, Muelly, Goodfellow, Hardt, and
  Kim]{adebayo2018sanity}
Julius Adebayo, Justin Gilmer, Michael Muelly, Ian Goodfellow, Moritz Hardt,
  and Been Kim.
\newblock Sanity checks for saliency maps.
\newblock In \emph{Advances in Neural Information Processing Systems}, pages
  9525--9536, 2018.

\bibitem[Ghorbani et~al.(2017)Ghorbani, Abid, and
  Zou]{ghorbani2017interpretation}
Amirata Ghorbani, Abubakar Abid, and James Zou.
\newblock Interpretation of neural networks is fragile.
\newblock \emph{arXiv preprint arXiv:1710.10547}, 2017.

\bibitem[Alvarez-Melis and Jaakkola(2018{\natexlab{b}})]{alvarez2018robustness}
David Alvarez-Melis and Tommi~S Jaakkola.
\newblock On the robustness of interpretability methods.
\newblock \emph{arXiv preprint arXiv:1806.08049}, 2018{\natexlab{b}}.

\bibitem[Ross et~al.(2017)Ross, Hughes, and Doshi-Velez]{ross2017right}
Andrew~Slavin Ross, Michael~C Hughes, and Finale Doshi-Velez.
\newblock Right for the right reasons: Training differentiable models by
  constraining their explanations.
\newblock \emph{arXiv preprint arXiv:1703.03717}, 2017.

\bibitem[Lee et~al.(2019)Lee, Jin, Alvarez-Melis, and
  Jaakkola]{lee2019functional}
Guang-He Lee, Wengong Jin, David Alvarez-Melis, and Tommi~S Jaakkola.
\newblock Functional transparency for structured data: a game-theoretic
  approach.
\newblock \emph{arXiv preprint arXiv:1902.09737}, 2019.

\bibitem[Zheng et~al.(2016)Zheng, Song, Leung, and
  Goodfellow]{zheng2016improving}
Stephan Zheng, Yang Song, Thomas Leung, and Ian Goodfellow.
\newblock Improving the robustness of deep neural networks via stability
  training.
\newblock In \emph{Proceedings of the ieee conference on computer vision and
  pattern recognition}, pages 4480--4488, 2016.

\bibitem[Dheeru and Karra~Taniskidou(2017)]{dua2017uci}
Dua Dheeru and Efi Karra~Taniskidou.
\newblock {UCI} machine learning repository, 2017.
\newblock URL \url{http://archive.ics.uci.edu/ml}.

\bibitem[LeCun(1998)]{mnist}
Yann LeCun.
\newblock The mnist database of handwritten digits.
\newblock \emph{http://yann. lecun. com/exdb/mnist/}, 1998.

\bibitem[Bloniarz et~al.(2016)Bloniarz, Talwalkar, Yu, and
  Wu]{bloniarz2016supervised}
Adam Bloniarz, Ameet Talwalkar, Bin Yu, and Christopher Wu.
\newblock Supervised neighborhoods for distributed nonparametric regression.
\newblock In \emph{Artificial Intelligence and Statistics}, pages 1450--1459,
  2016.

\end{thebibliography}

\end{document}